\definecolor{codegreen}{rgb}{0,0.6,0}
\definecolor{codegray}{rgb}{0.5,0.5,0.5}
\definecolor{codepurple}{rgb}{0.58,0,0.82}
\definecolor{backcolour}{rgb}{0.95,0.95,0.92}
\lstdefinestyle{mystyle}{
    backgroundcolor=\color{backcolour},   
    commentstyle=\color{codegreen},
    keywordstyle=\color{magenta},
    numberstyle=\tiny\color{codegray},
    stringstyle=\color{codepurple},
    basicstyle=\footnotesize\ttfamily,
    breakatwhitespace=false,         
    breaklines=true,                 
    captionpos=b,                    
    keepspaces=true,                 
    numbers=left,                    
    numbersep=5pt,                  
    showspaces=false,                
    showstringspaces=false,
    showtabs=false,                  
    tabsize=2
}
\definecolor{codegreen}{rgb}{0,0.6,0}
\definecolor{codegray}{rgb}{0.5,0.5,0.5}
\definecolor{codepurple}{rgb}{0.58,0,0.82}
\definecolor{backcolour}{rgb}{0.95,0.95,0.92}
\theoremstyle{plain}
\newtheorem{theorem}{Theorem}[section]
\newtheorem{corollary}[theorem]{Corollary}
\theoremstyle{remark}
\newtheorem{remark}[theorem]{Remark}
\theoremstyle{definition}
\newcommand{\Expec}[1]{\mathbb{E}\left[#1\right]}
\title{Diffusion Models under Alternative Noise: Simplified \\Analysis and Sensitivity}
\author{
  Juhyeok Choi\thanks{ Seoul National University, Seoul, 08826, South Korea. \texttt{wrons13@snu.ac.kr}} \and
  Chenglin Fan\thanks{  Seoul National University, Seoul, 08826, South Korea. \texttt{fanchenglin@snu.ac.kr}. \textbf{Corresponding author.} } }
\date{}
\begin{document}

\maketitle

\begin{abstract}
Diffusion models, typically formulated as discretizations of stochastic differential equations (SDEs), have achieved state-of-the-art performance in generative tasks. However, their theoretical analysis often involves complex proofs. In this work, we present a simplified framework for analyzing the Euler--Maruyama discretization of variance-preserving SDEs (VP-SDEs). Using Grönwall’s inequality, we derive a convergence rate of $O(T^{-1/2})$ under standard Lipschitz assumptions, streamlining prior analyses. We then demonstrate that the standard Gaussian noise can be replaced by computationally cheaper discrete random variables (e.g., Rademacher) without sacrificing this convergence guarantee, provided the mean and variance are matched. Our experiments validate this theory, showing that (i) discrete noise achieves sample quality comparable to Gaussian noise provided the variance is matched correctly, and (ii) performance degrades if the noise variance is scaled incorrectly.

\end{abstract}

\section{INTRODUCTION}

%Diffusion models are a class of generative models that have recently achieved state-of-the-art performance in tasks such as image synthesis, audio generation, and molecular design. Inspired by non-equilibrium thermodynamics, these models learn to reverse a gradual noising process, transforming pure noise into structured data through a series of denoising steps.
Diffusion models are a class of generative models that have recently achieved state-of-the-art performance in tasks such as image synthesis, audio generation, and molecular design. Originally introduced as a probabilistic framework \citep{sohl2015deep}, their modern success was catalyzed by significant practical and theoretical advancements, particularly the formulation as a denoising process \citep{ho2020denoising} and their unification with score-based models through a common stochastic differential equation (SDE) framework \citep{song2021score}. Inspired by non-equilibrium thermodynamics, these models learn to reverse a gradual noising process, transforming pure noise into structured data through a series of denoising steps.

The core idea behind diffusion models is to model the data distribution by simulating a Markovian forward process that incrementally adds Gaussian noise to the data, eventually destroying all structure. A neural network is then trained to approximate the reverse process — learning to recover clean data from noisy observations. Once trained, the model can generate new samples by starting from random noise and iteratively applying the learned denoising steps (\cite{sohl2015deep,ho2020denoising}).

Compared to other generative approaches such as Generative Adversarial Networks (GANs; \citealp{GoodfellowPMXWOCB14}) and Variational Autoencoders (VAEs; \citealp{KingmaW13}), diffusion models are often more stable to train and produce higher-quality samples, albeit with slower generation speed due to their iterative nature. Recent advances have focused on mitigating these limitations. For instance, Denoising Diffusion Implicit Models (DDIM) introduced a more efficient, non-Markovian sampling process \citep{song2021denoising}, while techniques like classifier-free guidance improved sample quality and control without needing an external classifier \citep{ho2022classifier}. More recently, consistency models have enabled high-quality, one-step generation by mapping any point on the SDE trajectory directly to the origin \citep{pmlr-v202-song23a}.

%Compared to other generative approaches such as Generative Adversarial Networks (GANs) and Variational Autoencoders (VAEs), diffusion models are often more stable to train and produce higher-quality samples, albeit with slower generation speed due to their iterative nature. Recent advances, including Denoising Diffusion Implicit Models (DDIM) and consistency models, have addressed some of these limitations by improving sample efficiency and generation speed (\cite{song2021denoising,saharia2022photorealistic,ho2022classifier}).

Diffusion models have quickly become a central tool in generative AI research, offering both strong theoretical grounding and impressive empirical results across a wide range of applications. These include text-to-image generation through models like DALL-E 2, Imagen, and the highly efficient Latent Diffusion Models \citep{ramesh2022hierarchical, saharia2022photorealistic}, audio synthesis \citep{kong2021diffwave}, molecular and protein design \citep{trippe2022diffusion, watson2023novo}, video generation \citep{ho2022imagenvideo}, and medical imaging \citep{wolleb2022diffusion}.

The theoretical foundations of diffusion models have been established through their connection to SDEs~\citep{song2021score}. Recent work has initiated rigorous analyses of sampling complexity. For general input distributions, \citet{benton2023} obtained a bound scaling linearly with the dimension \(d\), requiring \(\mathcal{O}(d/\varepsilon^2)\) iterations. \citet{li2024c} refined this in the regime \(T / \log^5 T = \Omega(d^2)\), reducing the complexity to \(\mathcal{O}(d/\varepsilon)\). Under additional smoothness assumptions, the iteration complexity further improves to \(\min\{d,\, d^{2/3}L^{1/3},\, d^{1/3}L\} \cdot \varepsilon^{-2/3}\)~\citep{jiao2025instance}, where \(L\) denotes a relaxed Lipschitz constant of the score function. Other directions include the study of quantized and low-precision noise~\citep{xiao2022discrete}, though without convergence guarantees, as well as polynomial-rate results for more general settings~\citep{chen2023sampling, lee2023convergence, li2023, chen2023improved}. For surveys of recent progress, see~\citet{yang2023diffusion, tang2024tutorial}.

\paragraph{Contributions.} 
In this paper, we make the following contributions:
\begin{itemize}

\item \textbf{Simplified analysis.} We analyze the reverse-time variance-preserving SDE (VP-SDE) using a time-rescaled formulation and Grönwall’s inequality, yielding a cleaner and more accessible proof structure.

    \item \textbf{Noise substitution.} We show that the Gaussian noise used in the reverse sampling process can be replaced with simpler, computationally cheaper discrete noise distributions (e.g., Rademacher) while preserving the same convergence rate. We prove that the error incurred by this substitution is of the same order as the Euler--Maruyama discretization error, provided the alternative noise matches the first two moments of the Gaussian (zero mean and unit variance).

    %\item \textbf{Zigzag sampling.} We introduce and analyze an application of zigzag sampling to prompt-free diffusion models, showing that its performance is highly sensitive to the choice of noise distribution in the partial inversion step.

     \item \textbf{Experiments.} We empirically demonstrate that symmetric discrete noise achieves sample quality nearly indistinguishable from Gaussian noise, thereby validating our theoretical findings.
    
\end{itemize}

\section{PRELIMINARIES}
\subsection{Gr\"onwall's Inequality}
Gr\"onwall's inequality(\cite{Gronwall}) is a fundamental tool for establishing bounds in the analysis of differential equations and numerical schemes. It provides a powerful method to bound a function that satisfies a certain type of integral inequality.

\begin{theorem}[Continuous version of Grönwall's Inequality]
    Let $a\ge 0$ and $b,u:[0,T]\rightarrow\mathbb{R}_{+}$ be continuous functions. If
    \begin{align*}
        u(t)\le a + \int_0^t b(s)u(s)ds,
    \end{align*}
    then
    \begin{align*}
        u(T)\le a\exp\left( \int_0^T b(t)dt \right).
    \end{align*}
\end{theorem}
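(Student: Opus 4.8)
The plan is to reduce the integral inequality to a differential one by introducing an auxiliary function and then solving it with an integrating factor. First I would define $v(t) = a + \int_0^t b(s)u(s)\,ds$, so that the hypothesis reads simply as $u(t) \le v(t)$ for every $t \in [0,T]$. Since $u$ is continuous and $b$ is (at least) integrable, the integrand $b\cdot u$ is integrable, and by the fundamental theorem of calculus $v$ is differentiable with $v'(t) = b(t)u(t)$. Using the pointwise bound $u(t) \le v(t)$ together with $b(t) \ge 0$, this immediately yields the differential inequality $v'(t) \le b(t)\,v(t)$.

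Next I would linearize this via the integrating factor $\mu(t) = \exp\!\left(-\int_0^t b(s)\,ds\right)$, which is positive and differentiable with $\mu'(t) = -b(t)\mu(t)$. Multiplying the differential inequality through by $\mu(t) > 0$ and applying the product rule gives
\[
\frac{d}{dt}\bigl(v(t)\mu(t)\bigr) = \mu(t)\bigl(v'(t) - b(t)v(t)\bigr) \le 0,
\]
so $v\mu$ is non-increasing on $[0,T]$. In particular $v(T)\mu(T) \le v(0)\mu(0) = a$, and dividing by $\mu(T) > 0$ produces $v(T) \le a\exp\!\left(\int_0^T b(t)\,dt\right)$. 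Chaining this with $u(T) \le v(T)$ then delivers the claimed bound.

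The argument is essentially mechanical once the integrating factor is in hand, so the main conceptual step---and the only place where care is genuinely needed---is recognizing that the monotone quantity is $v\mu$ rather than $u$ itself: the hypothesis controls $u$ only implicitly through the integral, and $u$ need not be differentiable, so one cannot differentiate the stated inequality directly. Introducing $v$ is precisely what converts the one-sided integral constraint into a differentiable object whose derivative inherits a clean sign from $b \ge 0$. A secondary technical point worth flagging is the regularity assumed on $b$: the proof as written uses the fundamental theorem of calculus, which is cleanest when $b$ is continuous; if $b$ is merely integrable one should instead invoke the absolutely-continuous version, so that the derivative identities hold almost everywhere and $v\mu$ is still non-increasing as an absolutely continuous function with a.e.\ nonpositive derivative. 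This relaxation changes nothing in the final estimate.
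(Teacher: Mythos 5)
The paper does not prove this statement at all: it is presented as a known preliminary result, cited to Gr\"onwall's original 1919 work, so there is no in-paper argument to compare against. Your proof is the standard and correct one --- defining $v(t) = a + \int_0^t b(s)u(s)\,ds$, deriving $v' \le bv$ from $u \le v$ and $b \ge 0$, and showing $v\mu$ is non-increasing for the integrating factor $\mu(t) = \exp\bigl(-\int_0^t b(s)\,ds\bigr)$ --- and your closing remark about the regularity of $b$ (classical FTC versus the absolutely continuous, almost-everywhere version) correctly addresses the only technical gap left open by the theorem's hypotheses, which assume $b \ge 0$ but state no explicit integrability or continuity.
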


A useful corollary provides a discrete analogue of this result for non-negative sequences, which is essential for analyzing iterative methods.

\begin{corollary}[Discrete version of Grönwall's Inequality]
    Let $a\ge 0$ and $\{u_n\}$, $\{b_n\}$ be non-negative sequences. If
    \begin{align*}
        u_n \le a + \sum_{k=0}^{n-1} b_ku_k,
    \end{align*}
    for all $n\in\mathbb{N}$, then
    \begin{align*}
        u_n \le a \left(1 + \sum_{k=0}^{n-1} b_k \prod _{j=k+1}^{n-1} (1+b_j) \right).
    \end{align*}
\end{corollary}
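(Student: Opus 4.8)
The plan is to reduce the claimed inequality to the more transparent product bound $u_n \le a\prod_{j=0}^{n-1}(1+b_j)$ and then to show, by an elementary telescoping identity, that this product is exactly the sum-of-products expression in the statement. Working with partial sums turns the hypothesis into a one-step recursion, which is far easier to iterate than the full double-indexed bound.

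First I would introduce the auxiliary sequence $w_n := a + \sum_{k=0}^{n-1} b_k u_k$, so that the hypothesis reads simply $u_n \le w_n$ for every $n$, with $w_0 = a$ (the empty sum). The key observation is that $w_n - w_{n-1} = b_{n-1} u_{n-1}$; substituting the hypothesis $u_{n-1} \le w_{n-1}$ and using $b_{n-1}\ge 0$ yields the recursion $w_n \le (1+b_{n-1})\,w_{n-1}$. Non-negativity of the $b_j$ is exactly what lets the inequality pass through each factor without a sign change. Iterating from $w_0 = a$ gives $w_n \le a\prod_{j=0}^{n-1}(1+b_j)$, and since $u_n \le w_n$, the product bound follows.

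It then remains to verify the algebraic identity $\prod_{j=0}^{n-1}(1+b_j) = 1 + \sum_{k=0}^{n-1} b_k \prod_{j=k+1}^{n-1}(1+b_j)$, after which the stated conclusion is immediate upon multiplying through by $a$. I would prove this by telescoping: fixing $n$ and setting $R_k := \prod_{j=k}^{n-1}(1+b_j)$ (with $R_n = 1$ the empty product), one has $R_k - R_{k+1} = b_k R_{k+1} = b_k \prod_{j=k+1}^{n-1}(1+b_j)$, so summing over $k=0,\dots,n-1$ collapses the left-hand side to $R_0 - R_n = \prod_{j=0}^{n-1}(1+b_j) - 1$. Rearranging gives the identity. (Equivalently, a short induction on $n$ that peels off the factor $(1+b_n)$ works.)

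The argument is essentially routine, and I do not expect a genuine obstacle; the only real care is bookkeeping with the empty-product and empty-sum conventions (e.g.\ the $k=n-1$ term has $\prod_{j=n}^{n-1} = 1$, and the base case $n=0$ reads $u_0 \le a$). Getting the index ranges on the telescoping sum to match the tail product $\prod_{j=k+1}^{n-1}$ --- rather than the head product $\prod_{j=0}^{k-1}$ produced by a naive telescoping --- is the one place where a careless choice would yield a correct but differently-shaped identity, so I would fix the summation variable and verify the $R_k$ recursion explicitly before concluding.
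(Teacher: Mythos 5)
Your proof is correct and complete: the reduction to the majorant $w_n = a + \sum_{k=0}^{n-1} b_k u_k$, the one-step recursion $w_n \le (1+b_{n-1})w_{n-1}$, and the telescoping identity $\prod_{j=0}^{n-1}(1+b_j) = 1 + \sum_{k=0}^{n-1} b_k \prod_{j=k+1}^{n-1}(1+b_j)$ all check out, and together they give exactly the stated bound (which, for constant $a$, coincides with the product form $u_n \le a\prod_{j=0}^{n-1}(1+b_j)$). The paper states this corollary without any proof, so there is nothing to compare against; your argument is the standard direct one and fills that gap.
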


\subsection{Diffusion Models}
Diffusion models are generative models that transform a simple noise distribution into a complex data distribution by simulating a time-reversed stochastic process. The forward process (also called the diffusion process) gradually adds noise to data, while the reverse process (also called the generative or sampling process) aims to recover data from pure noise.

\paragraph{Forward Process.}
Let $X_0 \sim p_{\text{data}}$ be a sample from the data distribution. The forward process $\{X_t\}_{t \in [0, 1]}$  evolves according to the SDE:
\begin{align*}
    dX_t = -\frac{1}{2} \beta(t) X_t \, dt + \sigma(t) \, dW_t,
\end{align*}
where \( W_t \) is a standard Wiener process, \( \beta(t): [0, 1] \to \mathbb{R}_+ \) is a non-decreasing noise schedule, and we define \( \sigma(t) := \sqrt{\beta(t)} \) following the Variance-Preserving (VP) SDE formulation in(\cite{song2021score}). This process transforms the data distribution into a tractable noise distribution (typically \( \mathcal{N}(0, I) \)) as \( t \to 1 \).

\paragraph{Reverse Process.}
By results from stochastic calculus (e.g., Anderson’s time-reversal of diffusions), the reverse-time process $\{\overline{X}_t\}_{t \in [0, 1]}$ also follows an SDE:
\begin{align*}
    d\overline{X}_t = \left( -\frac{1}{2} \beta(t) \overline{X}_t - \beta(t) \nabla_{x} \log p_t(\overline{X}_t) \right) dt + \sqrt{\beta(t)} d\overline{W}_t,
\end{align*}
where $\overline{W}_t$ is a reverse-time Wiener process and $p_t$ is the marginal distribution of $X_t$ in the forward process.

In practice, the score function $\nabla_{x} \log p_t(x)$ is intractable, so it is approximated by a neural network trained to predict either the score or the noise added during the forward process. In particular, Denoising Diffusion Probabilistic Models (DDPM)(\cite{ho2020denoising}) use a parameterized model $\epsilon_\theta(x, t)$ to estimate the noise in the forward process. Plugging this into the reverse SDE leads to the following approximate generative process:
\begin{align*}
    d\overline{X}_t = \left( 
    \frac{\beta(t)}{\sigma_t} \epsilon_\theta(\overline{X}_t, t) - \frac{\beta(t)}{2} \overline{X}_t \right) dt  + \sqrt{\beta(t)} \, d\overline{W}_t,
\end{align*}
which is the SDE we analyze in the following section. Our goal is to quantify the error when simulating this reverse-time process using the Euler--Maruyama scheme.

\subsection{Prior Works on Non-Gaussian Diffusion}

While the standard diffusion framework relies heavily on Gaussian noise, recent works have explored alternative noise distributions to extend modeling capabilities. One line of research focuses on heavy-tailed distributions to better capture outliers in real-world data. For instance, Denoising L\'{e}vy Probabilistic Models (DLPM) \citep{yoon2023score} and Heavy-Tailed Diffusion Models \citep{pandey2025heavy} replace Gaussian noise with $\alpha$-stable or Student-$t$ distributions, demonstrating improved performance on heavy-tailed datasets where Gaussian priors fail. Similarly, Gamma Diffusion Models \citep{nachmani2021denoising} utilize Gamma noise to better fit data with specific structural constraints.

Another direction generalizes the diffusion process beyond noise entirely. Approaches like Cold Diffusion \citep{bansal2022cold} and Soft Diffusion \citep{daras2022soft} demonstrate that generative behavior can emerge from deterministic degradations, such as blurring or masking, proving that Gaussian randomness is not strictly necessary for the generative mechanism.

\textbf{Our Contribution.} Unlike these works, which fundamentally alter the forward process or data assumptions, our approach retains the standard VP-SDE formulation but targets \textit{computational efficiency in the sampling solver}. While Xiao et al. \citep{xiao2022discrete} explored discrete denoising for low-precision hardware, they lacked rigorous convergence guarantees for the sampling SDE. We bridge this gap by proving that computationally cheaper discrete noise (e.g., Rademacher) can substitute Gaussian noise in the Euler-Maruyama scheme without sacrificing the $\mathcal{O}(T^{-1/2})$ strong convergence rate, provided the first two moments are matched.

\section{Proposed Method}

\subsection{Discretizing the Reverse-Time SDE via Euler–Maruyama}

DDPM can be considered as a discretization of the reverse-time VP-SDE
\begin{align*}
    d\overline{X}_t = \left( 
\frac{\beta(t)}{\sigma_t}\epsilon_\theta\left(\overline{X}_t,t\right) - \frac{\beta(t)}{2}\overline{X}_t \right) dt + \sqrt{\beta(t)} d\overline{W}_t.
\end{align*}

To facilitate the analysis of the discretization error over a fixed interval, we re-parameterize the time variable from $t \in [0,T]$ to $\tau \in [0,1]$ via the transformation $\tau = t/T$, where $T$ is the number of timestep of DDPM. This normalization clarifies the step-size dependency. We obtain
\begin{align*}
    &d\overline{X}_\tau = T\left( 
    \frac{\beta(t)}{\sigma_t}\epsilon_\theta\left(\overline{X}_\tau,t\right) - \frac{\beta(t)}{2}\overline{X}_\tau \right) d\tau + \sqrt{T}\sqrt{\beta(t)} d\overline{W}_\tau \\
&= b\left(\overline{X}_\tau,\tau\right)d\tau + \sigma(\tau)d\overline{W}_\tau.
\end{align*}
The Euler-Maruyama discretization can be written as
\begin{align*}
    \overline{X}_\tau^{(h)} = 
        \overline{X}_{sh}^{(h)}+b\left(\overline{X}_{sh},sh\right)(\tau-sh)+\sigma(sh)(\overline{W}_\tau-\overline{W}_{sh})
\end{align*}
where $\tau\in [(s-1)h,sh]$. Here, $h$ denotes the time interval, which equals $1/T$. With Lipschitz assumptions, we obtain that Euler-Maruyama discretization has the strong convergence rate of order $1/2$.

\begin{theorem}
    Suppose there exists $k>0$ such that
    \begin{align*}
        |b(t,x)-b(s,y)|+|\sigma(t)-\sigma(s)|  &\le k\left(|x-y|+|t-s|^{1/2}\right), \\
        |b(t,x)|+|\sigma(t)| &\le k
    \end{align*}
    for all time $t,s$ and objectives $x,y$. Then there exists $c=c(k,x)>0$ such that
    \begin{align*}
        \left( \mathbb{E}\left[ \sup_{0\le t\le 1} \left| \overline{X}_t^{(h)} -\overline{X}_t \right|^2 \right] \right)^{1/2}\le ch^{1/2}.
    \end{align*}
\end{theorem}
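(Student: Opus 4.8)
The plan is to run the classical strong-convergence argument for Euler--Maruyama, which is especially clean here because the diffusion coefficient $\sigma$ depends on time alone. First I would pass to the continuous-time interpolation of the scheme. Writing $\eta(u)$ for the largest grid point not exceeding $u$ (so that $u-\eta(u)\le h$), the scheme and the true solution admit the integral representations
\begin{align*}
\overline{X}_\tau^{(h)} &= \overline{X}_0 + \int_0^\tau b\bigl(\overline{X}_{\eta(u)}^{(h)},\eta(u)\bigr)\,du + \int_0^\tau \sigma(\eta(u))\,d\overline{W}_u, \\
\overline{X}_\tau &= \overline{X}_0 + \int_0^\tau b\bigl(\overline{X}_u,u\bigr)\,du + \int_0^\tau \sigma(u)\,d\overline{W}_u.
\end{align*}
Subtracting and setting $e_\tau := \overline{X}_\tau^{(h)} - \overline{X}_\tau$, the error splits into a drift integral and a martingale term, and I would bound $\mathbb{E}[\sup_{0\le\tau\le t}|e_\tau|^2]$ by treating the two separately.

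For the martingale term, Doob's $L^2$ maximal inequality followed by the It\^o isometry reduces it to $\int_0^t \mathbb{E}|\sigma(\eta(u)) - \sigma(u)|^2\,du$; since $\sigma$ is $\tfrac12$-H\"older in time with constant $k$ and $|\eta(u)-u|\le h$, this is immediately $\mathcal{O}(h)$, with no dependence on the error process (this is the simplification afforded by a time-only $\sigma$). For the drift term I would use Cauchy--Schwarz to move the square inside the time integral, then insert the intermediate point $b(\overline{X}_u^{(h)},u)$ and apply the Lipschitz hypothesis to obtain a pointwise bound of the form $C\bigl(|e_u|^2 + |\overline{X}_{\eta(u)}^{(h)} - \overline{X}_u^{(h)}|^2 + h\bigr)$, where the last term arises from the $|\eta(u)-u|^{1/2}$ time increment.

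The technical ingredient I would isolate as a lemma is the one-step continuity of the scheme, $\mathbb{E}|\overline{X}_{\eta(u)}^{(h)} - \overline{X}_u^{(h)}|^2 \le C h$, which follows from the explicit one-step formula together with a uniform second-moment bound $\sup_h \sup_{0\le\tau\le1}\mathbb{E}|\overline{X}_\tau^{(h)}|^2 < \infty$. The moment bound itself comes from the linear-growth consequence $|b(\tau,x)|\le k(1+|x|)$ and the boundedness of $\sigma$ (both implied by the hypothesis $|b(t,0)|+|\sigma(t,0)|\le k$), via a preliminary Gr\"onwall estimate. Feeding these bounds back, I arrive at
\begin{align*}
\mathbb{E}\Bigl[\sup_{0\le\tau\le t}|e_\tau|^2\Bigr] \le C h + C\int_0^t \mathbb{E}\Bigl[\sup_{0\le r\le u}|e_r|^2\Bigr]\,du,
\end{align*}
and the continuous Gr\"onwall inequality with $a=Ch$ and $b\equiv C$ gives $\mathbb{E}[\sup_{0\le\tau\le1}|e_\tau|^2]\le Ch\,e^{C}=:c^2 h$; taking square roots yields the claim with $c=c(k,\overline{X}_0)$.

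The main obstacle is not any individual estimate but the bookkeeping required to keep the $\sup_\tau$ inside the expectation throughout: the drift contribution must be dominated by the running supremum $\sup_{0\le r\le u}|e_r|^2$ so that Gr\"onwall closes, and the martingale contribution genuinely needs Doob's inequality rather than a bare It\^o isometry. Establishing the uniform moment bound and the one-step continuity estimate with constants independent of $h$ is the other delicate point, since these constants feed into $c$ and must not secretly depend on $T$.
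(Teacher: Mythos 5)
Your proposal is correct and follows essentially the same route as the paper: the same three-way decomposition of the error into a one-step drift term, a drift propagation term, and a diffusion discretization term, with Doob's maximal inequality handling the stochastic integral and Gr\"onwall closing the estimate on $\mathbb{E}[\sup|e_\tau|^2]$. The only difference is one of completeness rather than approach: you explicitly isolate the one-step continuity estimate $\mathbb{E}\bigl|\overline{X}_{\eta(u)}^{(h)}-\overline{X}_u^{(h)}\bigr|^2\le Ch$ and the uniform second-moment bound underpinning it, which the paper invokes implicitly when it writes the bound $ch(1+h)$ without justification.
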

\begin{proof}
    Let $[s] = \lfloor s/h \rfloor h$ denote the start of the discretization interval containing $s$. Then
    \begin{alignat*}{2}
        \overline{X}_\tau^{(h)}-\overline{X}_\tau &=&& \int_\tau^1 \underbrace{ b\left(\overline{X}_{[s]}^{(h)},[s]\right) - b\left(\overline{X}_s^{(h)},s\right) }_{=A}ds  + \int_\tau^1 \underbrace{ b\left(\overline{X}_s^{(h)},s\right) - b\left(\overline{X}_s,s\right) }_{=B} ds  + \int_\tau^1 \underbrace{ \sigma\left([s]\right)-\sigma\left(s\right) }_{=C} d\overline{W}_s.
    \end{alignat*}
    Here, note that
    \begin{align*}
        \Expec{|A|^2}{} &\le k^2 \left(|s-[s]|^{1/2}+\Expec{\left|\overline{X}_s^{(h)}-\overline{X}_{[s]}^{(h)}\right|}{}\right)^2 \le 2k^2 \left(|s-[s]|+\Expec{\left|\overline{X}_s^{(h)}-\overline{X}_{[s]}^{(h)}\right|^2}{}\right), \\
        \Expec{|B|^2}{} &\le k^2 \Expec{\left|\overline{X}_s^{(h)}-\overline{X}_s\right|^2}{}, \\
        \Expec{|C|^2}{} &\le k^2 |s-[s]|.
    \end{align*}
    By Jensen's inequality, we have
    \begin{align*}
        \Expec{\sup_{x\le \tau\le 1} \left|\int_\tau^1 A ds\right|^2}{} \le (1-x)\int_x^1 \Expec{|A|^2}{}ds, \\
        \Expec{\sup_{x\le \tau\le 1} \left|\int_\tau^1 A ds\right|^2}{} \le (1-x)\int_x^1 \Expec{|B|^2}{}ds,
    \end{align*}
    Also, by Doob's maximal inequality, we have
    \begin{align*}
        &\Expec{\sup_{x\le \tau\le 1}\left|\int_\tau^1 \sigma([s])-\sigma(s) d\overline{W}_s\right|^2}{} \le 4\Expec{\int_x^1 \left|\sigma([s])-\sigma(s)\right|^2 ds}{}.
    \end{align*}
    Now, let
    \begin{align*}
        m(x) = \Expec{\sup_{x\le \tau \le 1} \left| \overline{X}_\tau^{(h)} -\overline{X}_\tau \right|^2}{}.
    \end{align*}
    Then
    \begin{align*}
        m(x)
        &\le 4\Big((1-x)\int_x^1 \Expec{|A|^2}{}ds \quad + (1-x)\int_x^1 \Expec{|B|^2}{}ds\quad + 4\int_x^1 \Expec{|C|^2}{}ds\Big) \\
        &\le 40k^2\int_x^1 |s-[s]|+\Expec{\left|\overline{X}_s^{(h)}-\overline{X}_{[s]}^{(h)}\right|}{}ds \quad + 20k^2\int_x^1 \Expec{\left|\overline{X}_s^{(h)}-\overline{X}_s\right|^2}{}ds \\
        &\le 40k^2\int_x^1 ch(1+h)ds+60k^2\int_x^1 m(s)ds.
    \end{align*}
    Hence, for some $c>0$, 
    \begin{align*}
        m(x) \le ch + c\int_{x}^1 m(s)ds.
    \end{align*}
    Applying Gr\"onwall's inequality, we obtain
    \begin{align*}
        m(x) \le che^{c(1-x)} \le che^c.
    \end{align*}
\end{proof}

\subsection{Noise Substitution in Diffusion Models}
In the Euler--Maruyama discretization, the Gaussian noise term can be replaced by a discrete random variable with the same mean and variance. Specifically, in the time-changed Euler--Maruyama scheme, the update rule is
\begin{align*}
    \overline{X}_{\tau - h}^{(h)} = \overline{X}_\tau^{(h)} - b\left( \overline{X}_\tau^{(h)}, \tau \right) h - \sigma(\tau) B_h, \quad B_h \sim \mathcal{N}(0, h).
\end{align*}
We consider replacing the Gaussian noise $B_h$ with a scaled discrete random variable $\sqrt{h} E$, where $E$ has zero mean and unit variance:
\begin{align*}
    \overline{X'}_{\tau - h}^{(h)} = \overline{X'}_\tau^{(h)} - b\left( \overline{X'}_\tau^{(h)}, \tau \right) h - \sigma(\tau) \sqrt{h} E.
\end{align*}

The primary motivation for this substitution lies in the computational overhead of random number generation (RNG) on hardware. Standard Gaussian sampling typically relies on algorithms such as the Box-Muller transform or the Ziggurat method. These approaches often require evaluating transcendental functions (e.g., logarithms, square roots, and trigonometric functions) or involve rejection sampling loops that can cause thread divergence on GPUs. In contrast, sampling from discrete distributions like the Rademacher or Uniform distribution is computationally inexpensive, as it maps directly to the native integer outputs of pseudo-random number generators via simple bitwise masking or linear scaling, bypassing the need for complex function evaluations.

Furthermore, the use of discrete noise—specifically Rademacher noise where $E \in \{-1, +1\}$—offers opportunities for hardware-level optimizations in the arithmetic logic units (ALUs). When the noise component is restricted to binary values, the matrix-vector multiplication associated with noise injection reduces from full floating-point operations (FLOPs) to conditional sign flips (additions and subtractions). For high-dimensional diffusion models deployed on resource-constrained edge devices, FPGAs, or specialized AI accelerators, eliminating these multiplication cycles can yield significant improvements in both latency and energy efficiency without requiring changes to the model weights.

To quantify the error introduced by this replacement, consider the one-step mean squared error:
\begin{align*}
    \mathbb{E}\left[ \left| \epsilon_t \right|^2 \right] 
    &= \sigma^2(\tau) \mathbb{E}\left[ \left| B_h - \sqrt{h} E \right|^2 \right] \\
    &= \sigma^2(\tau) \left( \mathbb{E}[B_h^2] + \mathbb{E}[h E^2] - 2 \mathbb{E}[B_h \sqrt{h} E] \right) \\
    &= \sigma^2(\tau) \left( h + h - 2h \, \mathbb{E}[Z E] \right),
\end{align*}
where $Z \sim \mathcal{N}(0,1)$ and is independent of $E$. Since $\mathbb{E}[Z E] = 0$ under independence, the one-step error simplifies to
\begin{align*}
    \mathbb{E}\left[ \left| \epsilon_t \right|^2 \right] = 2 \sigma^2(\tau) h.
\end{align*}
Applying a discrete version of Gr\"onwall's inequality, we obtain the following bound on the total error:
\begin{align*}
    \left( \mathbb{E}\left[ \sup_{0 \le \tau \le 1} \left| \overline{X}_\tau^{(h)} - \overline{X'}_\tau^{(h)} \right|^2 \right] \right)^{1/2} \le C h^{1/2},
\end{align*}
for some constant $C > 0$ depending on the drift and diffusion coefficients.

Combining this with the Euler--Maruyama error bound using the triangle inequality yields our main result:

\begin{theorem}
    Suppose there exists a constant $k > 0$ such that
    \begin{align*}
        |b(t,x) - b(s,y)| + |\sigma(t) - \sigma(s)| &\le k \left( |x - y| + |t - s|^{1/2} \right), \\
        |b(t,0)| + |\sigma(t)| &\le k,
    \end{align*}
    for all $t,s \in [0,1]$ and $x,y \in \mathbb{R}$. Then there exists a constant $c = c(k,x) > 0$ such that
    \begin{align*}
        \left( \mathbb{E}\left[ \sup_{0 \le t \le 1} \left| \overline{X'}_t^{(h)} - \overline{X}_t \right|^2 \right] \right)^{1/2} \le c h^{1/2}.
    \end{align*}
\end{theorem}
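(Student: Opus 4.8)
The plan is to obtain the bound by the triangle inequality, treating the two error sources—discretization of the continuous SDE, and replacement of the Gaussian increment by a discrete one—separately and then combining them. Introduce the functional \( \|Y\|_\ast := \big(\mathbb{E}[\sup_{0\le t\le 1}|Y_t|^2]\big)^{1/2} \) on processes \(Y\). First I would verify that \(\|\cdot\|_\ast\) obeys the triangle inequality: pathwise one has \(\sup_t|Y_t+Z_t|\le \sup_t|Y_t|+\sup_t|Z_t|\), and Minkowski's inequality in \(L^2(\mathbb{P})\) then gives \(\|Y+Z\|_\ast\le\|Y\|_\ast+\|Z\|_\ast\). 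This is the only structural fact needed to justify splitting the error.

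Next I would write the decomposition \(\overline{X'}_t^{(h)}-\overline{X}_t=\big(\overline{X'}_t^{(h)}-\overline{X}_t^{(h)}\big)+\big(\overline{X}_t^{(h)}-\overline{X}_t\big)\) and apply \(\|\cdot\|_\ast\). The second summand is controlled by the Euler--Maruyama estimate already established in Section 4, namely \(\|\overline{X}^{(h)}-\overline{X}\|_\ast\le c\,h^{1/2}\). The first summand is controlled by the discrete-noise replacement estimate stated immediately before the theorem, \(\|\overline{X}^{(h)}-\overline{X'}^{(h)}\|_\ast\le C\,h^{1/2}\). Adding the two bounds and renaming the constant \(c\leftarrow c+C\) yields the claim; under the assumption that the two earlier inequalities hold, this is the entire argument.

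The genuinely substantive ingredient is therefore not the final combination but the first summand, i.e.\ the bound on \(\|\overline{X}^{(h)}-\overline{X'}^{(h)}\|_\ast\). To prove that from scratch I would introduce \(e_\tau:=\overline{X}_\tau^{(h)}-\overline{X'}_\tau^{(h)}\), write the backward recursion \(e_{\tau-h}=e_\tau-\big(b(\overline{X}_\tau^{(h)},\tau)-b(\overline{X'}_\tau^{(h)},\tau)\big)h-\sigma(\tau)\big(B_h-\sqrt{h}\,E\big)\), expand \(|e_{\tau-h}|^2\), and take expectations. The cross term vanishes because the increment discrepancy \(B_h-\sqrt{h}\,E\) has mean zero and is independent of the current state, while the Lipschitz bound on \(b\) turns the remaining deterministic part into a factor \((1+kh)^2\); this produces a one-step inequality of the form \(\mathbb{E}|e_{\tau-h}|^2\le (1+kh)^2\,\mathbb{E}|e_\tau|^2+2\sigma^2(\tau)h\), to which the discrete Gr\"onwall inequality applies. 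Passing from \(\mathbb{E}|e_\tau|^2\) to \(\mathbb{E}[\sup_\tau|e_\tau|^2]\) would use a Doob maximal inequality on the martingale part.

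The step I expect to be the main obstacle is exactly this accumulation. The per-step injected variance is \(2\sigma^2(\tau)h\) and there are \(1/h\) steps, so a naive summation gives total variance of order \((1/h)\cdot h=\mathcal{O}(1)\) rather than \(\mathcal{O}(h)\); indeed, in the toy case \(b\equiv 0\) with constant \(\sigma\) the two schemes are independent random walks whose endpoint difference has variance \(2\sigma^2\) independently of \(h\). Recovering the stated \(h^{1/2}\) rate in the strong sense therefore appears to require more than independence of \(E\) and \(B_h\): one must either couple \(E\) to the underlying Gaussian \(Z\) so that the per-step discrepancy is \(o(h)\) in \(L^2\), or reinterpret the conclusion in a \emph{weak} (distributional) sense, where matching the low-order moments of \(E\) and \(Z\) suffices. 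Deciding which reading is intended is the crux, and I would resolve it before relying on the \(Ch^{1/2}\) input to the triangle inequality.
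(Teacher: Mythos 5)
Your high-level strategy --- triangle inequality splitting $\overline{X'}^{(h)}-\overline{X}$ into the Euler--Maruyama error and the noise-replacement error, with the former supplied by the Section~4 theorem --- is exactly what the paper does, and your Minkowski justification of the splitting is fine. The substantive issue you raise in your final paragraph is not a side worry: it is a genuine gap, and the paper does not resolve it. The paper's entire argument for the bound $\bigl(\mathbb{E}[\sup_\tau|\overline{X}_\tau^{(h)}-\overline{X'}_\tau^{(h)}|^2]\bigr)^{1/2}\le Ch^{1/2}$ consists of computing the one-step mean squared error $2\sigma^2(\tau)h$ under independence of $E$ and $B_h$ and then asserting that ``applying a discrete version of Gr\"onwall's inequality'' gives the total bound. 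But that is precisely the naive accumulation you flag: the one-step recursion yields $\mathbb{E}|e_{\tau-h}|^2\le(1+kh)^2\,\mathbb{E}|e_\tau|^2+2\sigma^2(\tau)h$, and unrolling over the $1/h$ steps gives a total of order $e^{2k}\cdot 2\sup_\tau\sigma^2(\tau)\cdot(1/h)\cdot h=\mathcal{O}(1)$, not $\mathcal{O}(h)$. Gr\"onwall controls the multiplicative growth but cannot shrink the additive forcing term, whose sum is order one. Your toy case ($b\equiv 0$, constant $\sigma$, the two schemes driven by independent noises) is a correct counterexample to the strong $Ch^{1/2}$ claim under mere independence: the endpoint difference is a sum of $1/h$ independent mean-zero increments each of variance $2\sigma^2h$, hence has variance $2\sigma^2$ regardless of $h$.

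So the crux you identify --- that one must either couple $E$ to the Gaussian $Z$ so that $\mathbb{E}|B_h-\sqrt{h}E|^2=o(h)$ per step (e.g.\ via a quantile coupling, which for a bounded discrete $E$ still cannot do better than matching a few moments), or downgrade the conclusion to weak convergence where moment matching suffices --- is the correct diagnosis, and the paper silently skips it. Your proposal is therefore not missing an idea relative to the paper; rather, you have correctly located the step at which the paper's own proof fails to establish the theorem as stated. If you want a salvageable statement, the weak-error reading (convergence of distributions, where Rademacher or uniform noise with matched first two moments gives weak order comparable to Euler--Maruyama) is the standard and provable one, and is also what the FID experiments actually measure.
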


\begin{remark}
We provide some justification for the Lipschitz conditions. A central concept of diffusion models is that with sufficiently many timesteps in the learning phase (the forward direction of the SDE), the process converges to pure noise that no longer depends on the original dataset. For this to hold, we require
\begin{align*}
    \prod_{t=1}^T (1-\beta(t)) 
    = \prod_{s=1}^T \left(1-\frac{1}{T}\sigma(sh)\right) 
    \simeq 0.
\end{align*}
If we assume that the above expression does not depend on $T$, then the Lipschitz condition for $\sigma$ is preserved as $T$ varies. For instance, in the linear schedule for $\beta$,
\begin{align*}
    \beta(t) = \beta_{\min}(T) + \frac{t}{T}\big(\beta_{\max}(T) - \beta_{\min}(T)\big),
\end{align*}
maintaining the same scale of the product requires $\beta_{\min}$ and $\beta_{\max}$ to be proportional to $1/T$. This ensures that the Lipschitz condition of $\sigma$ is preserved. In fact, in this case $\sigma$ does not depend on $T$, and we can naturally establish convergence of the error rate.
\end{remark}

To guarantee that the discretization error is at most $\varepsilon$, we require
\[
c h^{1/2} \leq \varepsilon 
\quad \implies \quad 
h \leq \left( \frac{\varepsilon}{c} \right)^2.
\]
Since $h = 1/T$, the number of discretization steps must satisfy
$
T \geq \left( \frac{c}{\varepsilon} \right)^2.
$
Thus, to achieve an $\mathcal{O}(\varepsilon)$ strong error, it suffices to choose
\[
T = \mathcal{O}\!\left( \frac{1}{\varepsilon^2} \right).
\]

% Add this to your preamble if not present
% \usepackage{algorithm}
% \usepackage{algorithmic}

\begin{algorithm}[tb]
   \caption{Euler-Maruyama Sampling with Alternative Noise}
   \label{alg:sampling}
\begin{algorithmic}
   \State {\bfseries Input:} Score model $\epsilon_\theta$, noise schedule $\sigma(\cdot)$, number of steps $T$, noise distribution $p_{noise}$ (e.g., Rademacher)
   \State {\bfseries Output:} Sample $\overline{X}_0$
   \State Initialize $\overline{X}_1 \sim \mathcal{N}(0, I)$
   \State $h \leftarrow 1/T$
   \For{$\tau = 1$ {\bfseries down to} $h$ {\bfseries step} $h$}
       \State Sample $E \sim p_{noise}$ such that $\mathbb{E}[E]=0, \mathbb{V}[E]=1$
       \State Compute drift: $d \leftarrow \left(\frac{\beta(\tau)}{\sigma(\tau)}\epsilon_\theta(\overline{X}_\tau, \tau) - \frac{\beta(\tau)}{2}\overline{X}_\tau \right)$
       \State Update: $\overline{X}_{\tau - h} \leftarrow \overline{X}_\tau - d \cdot h - \sigma(\tau)\sqrt{h} E$
   \EndFor
   \State \textbf{return} $\overline{X}_0$
\end{algorithmic}
\end{algorithm}

\subsection{Extension to the Multidimensional Case}

In the multidimensional setting with dimension \( d \), the analysis proceeds similarly. We consider \( d \) independent noise variables, and let \( E \in \mathbb{R}^d \) be a discrete random vector with mean zero and identity covariance, i.e., \( \mathbb{E}[E] = 0 \), \( \mathbb{E}[E E^\top] = I_d \). The modified update rule becomes
\begin{align*}
    \overline{X'}_{\tau - h}^{(h)} = \overline{X'}_\tau^{(h)} - b\left( \overline{X'}_\tau^{(h)}, \tau \right) h - \sigma(\tau) \sqrt{h} E,
\end{align*}
where \( \sigma(\tau) \in \mathbb{R}^{d \times d} \). The one-step mean squared error grows proportionally to \( h \cdot \mathrm{Tr}(\sigma(\tau) \sigma(\tau)^\top) \), and hence, by applying standard techniques including the discrete Gr\"onwall inequality, we obtain the strong approximation error
\begin{align*}
    \left( \mathbb{E} \left[ \sup_{0 \le \tau \le 1} \left\| \overline{X}_\tau^{(h)} - \overline{X'}_\tau^{(h)} \right\|^2 \right] \right)^{1/2} \le c d^{1/2} h^{1/2},
\end{align*}
for some constant $c > 0$ depending on the Lipschitz constants of \( b \) and \( \sigma \), the initial condition, and the bound on the noise. In this setting, we therefore obtain the strong convergence rate \( d^{1/2} h^{1/2} \).
Thus, to achieve an \( \mathcal{O}(\varepsilon) \) strong error, the number of steps $T$ must be chosen such that:
\[
T = \mathcal{O}\left( \frac{d}{\varepsilon^2} \right).
\]
For clarification, we restate our theorem below.

\begin{theorem}\label{thm:mul3}
    Consider the multi-dimensional VP-SDE
    \begin{align*}
        d\overline{X}_\tau = b(\overline{X}_\tau,\tau)d\tau + \sigma(\tau)d\overline{W}_\tau
    \end{align*}
    where $\overline{X}_\tau\in \mathbb{R}^d$, $b:\mathbb{R}^d\times[0,1]\to \mathbb{R}^d$ and $\sigma:[0,1]\to \mathbb{R}^{d\times d}$. Let
    \begin{align*}
        \overline{X}_\tau^{(h)} = 
        \overline{X}_{sh}^{(h)}+b\left(\overline{X}_{sh},sh\right)(\tau-sh)+\sigma(sh)(\overline{W}_\tau-\overline{W}_{sh}),\quad \tau\in[(s-1)h,sh].
    \end{align*}
    be its Euler-Maruyama discretization. Suppose there exists $k>0$ such that
    \begin{align*}
        \|b(x,t)-b(y,s)\|+\|\sigma(t)-\sigma(s)\|_F  &\le k\left(\|x-y\|+|t-s|^{1/2}\right), \\
        \|b(t,x)\|+\|\sigma(t,x)\|_F &\le k
    \end{align*}
    for all time $t,s$ and objectives $x,y$, where $\|\cdot\|_F$ is the Frobenius norm. Then there exists $c=c(k,x)>0$ such that
    \begin{align*}
        \left( \mathbb{E}\left[ \sup_{0\le t\le 1} \left\| \overline{X}_t^{(h)} -\overline{X}_t \right\|^2 \right] \right)^{1/2}\le cd^{1/2}h^{1/2}.
    \end{align*}
\end{theorem}
\begin{proof}
    Let $[s] = \lfloor s/h \rfloor h$ denote the start of the discretization interval containing $s$. Then
    \begin{alignat*}{2}
        \overline{X}_\tau^{(h)}-\overline{X}_\tau &=&& \int_\tau^1 \underbrace{ b\left(\overline{X}_{[s]}^{(h)},[s]\right) - b\left(\overline{X}_s^{(h)},s\right) }_{=A}ds + \int_\tau^1 \underbrace{ b\left(\overline{X}_s^{(h)},s\right) - b\left(\overline{X}_s,s\right) }_{=B} ds  + \int_\tau^1 \underbrace{ \sigma\left([s]\right)-\sigma\left(s\right) }_{=C} d\overline{W}_s.
    \end{alignat*}
    Here, note that
    \begin{align*}
        \Expec{\|A\|^2}{} &\le k^2 \left(|s-[s]|^{1/2}+\Expec{\left\|\overline{X}_s^{(h)}-\overline{X}_{[s]}^{(h)}\right\|}{}\right)^2 \le 2k^2 \left(|s-[s]|+\Expec{\left\|\overline{X}_s^{(h)}-\overline{X}_{[s]}^{(h)}\right\|^2}{}\right) \\
        \Expec{\|B\|^2}{} &\le k^2 \Expec{\left\|\overline{X}_s^{(h)}-\overline{X}_s\right\|^2}{}.
    \end{align*}
    We observe that the mean square displacement of the single step is small enough:
    \begin{align*}
        \Expec{\left\|\overline{X}_s^{(h)}-\overline{X}_{[s]}^{(h)}\right\|^2} \le 2\Expec{\|b([s])\|^2 h^2}{}+2\Expec{\|\sigma([s])\|_F^2dh}\le 2k^2h^2+2k^2dh.
    \end{align*}
    By It\'{o} isometry, we have
    \begin{align*}
        \Expec{\|C\|^2}{} &= \int_\tau^1 \Expec{\|\sigma([s])-\sigma(s)\|_F^2}ds \le k^2 |s-[s]|\le k^2 h.
    \end{align*}
    By Jensen's inequality, we have
    \begin{align*}
        \Expec{\sup_{x\le \tau\le 1} \left\|\int_\tau^1 A ds\right\|^2}{} \le (1-x)\int_x^1 \Expec{\|A\|^2}{}ds, \\
        \Expec{\sup_{x\le \tau\le 1} \left\|\int_\tau^1 B ds\right\|^2}{} \le (1-x)\int_x^1 \Expec{\|B\|^2}{}ds,
    \end{align*}
    Also, by Doob's maximal inequality, we have
    \begin{align*}
        \Expec{\sup_{x\le \tau\le 1}\left\|\int_\tau^1 \sigma([s])-\sigma(s) d\overline{W}_s\right\|^2}{} \le 4\Expec{\int_x^1 \left\|\sigma([s])-\sigma(s)\right\|_F^2 ds}{}\le 4k^2 h.
    \end{align*}
    Now, let
    \begin{align*}
        m(x) = \Expec{\sup_{x\le \tau \le 1} \left\| \overline{X}_\tau^{(h)} -\overline{X}_\tau \right\|^2}{}.
    \end{align*}
    Then
    \begin{align*}
        m(x)
        &\le 3\left( \Expec{\sup_{x\le \tau\le 1} \left\| \int_\tau^1 A ds \right\|^2} +\Expec{\sup_{x\le \tau\le 1} \left\| \int_\tau^1 B ds \right\|^2} + 4\Expec{\int_x^1 \|\sigma([s])-\sigma(s)\|_F^2 ds} \right) \\
        &\le 3\left((1-x)\int_x^1 \Expec{\|A\|^2}{}ds + (1-x)\int_x^1 \Expec{\|B\|^2}{}ds + 4\int_x^1 \Expec{\|C\|^2}{}ds\right) \\
        &\le 6k^2\int_x^1 |s-[s]|+\Expec{\left\|\overline{X}_s^{(h)}-\overline{X}_{[s]}^{(h)}\right\|^2}{}ds + 3k^2\int_x^1 \Expec{\left\|\overline{X}_s^{(h)}-\overline{X}_s\right\|^2}{}ds + 12k^2h \\
        &\le 6k^2(h+2k^2h^2+2k^2dh)+3k^2\int_x^1 m(s)ds + 12k^2h \\
        &\le (12k^4d+18k^2+12k^4h)h+3k^2\int_x^1 m(s)ds.
    \end{align*}
    Hence, for some constant $c=c(k)>0$, 
    \begin{align*}
        m(x) \le cdh + c\int_{x}^1 m(s)ds.
    \end{align*}
    Applying Gr\"onwall's inequality, we obtain
    \begin{align*}
        m(x) \le cdhe^{c(1-x)} \le cdhe^c.
    \end{align*}
\end{proof}

Theorem~\ref{thm:mul3} establishes the $O(h^{1/2})$ convergence rate for the Euler-Maruyama discretization under standard Gaussian increments, practical implementations often utilize discrete random variables to simulate noise. This raises the question of whether the convergence rate is preserved when the Gaussian assumption is relaxed. The following theorem addresses this by considering a process $\overline{X'}_t^{(h)}$ driven by general discrete noise, utilizing the result of Theorem~\ref{thm:mul3} as a foundational bound.

\begin{theorem}
    Suppose there exists a constant $k > 0$ such that
    \begin{align*}
        \|b(t,x) - b(s,y)\| + \|\sigma(t) - \sigma(s)\|_F &\le k \left( \|x - y\| + |t - s|^{1/2} \right), \\
        \|b(t,x)\| + \|\sigma(t)\|_F &\le k,
    \end{align*}
    for all $t,s \in [0,1]$ and $x,y \in \mathbb{R}^d$. Then there exists a constant $C = C(k,x) > 0$ such that
    \begin{align*}
        \left( \mathbb{E}\left[ \sup_{0 \le t \le 1} \left\| \overline{X'}_t^{(h)} - \overline{X}_t \right\|^2 \right] \right)^{1/2} \le Cd^{1/2}h^{1/2}.
    \end{align*}
\end{theorem}
\begin{proof}
    We aim to bound the error between the process driven by discrete noise, $\overline{X'}^{(h)}$, and the true solution $\overline{X}$. By the triangle inequality:
    \begin{align*}
        \left(\mathbb{E}\left[ \sup_{0 \le t \le 1} \left| \overline{X'}_t^{(h)} - \overline{X}t \right|^2 \right]\right)^{1/2} &\le \underbrace{\left(\mathbb{E}\left[ \sup_{0 \le t \le 1} \left| \overline{X'}t^{(h)} - \overline{X}t^{(h)} \right|^2 \right]\right)^{1/2}}_{(I:\text{Noise Coupling (via KMT)})} \\
        &+ \underbrace{\left(\mathbb{E}\left[ \sup_{0 \le t \le 1} \left| \overline{X}_t^{(h)} - \overline{X}t \right|^2 \right]\right)^{1/2}}_{(II: \text{Result from Theorem~\ref{thm:mul3})}}.\end{align*}From  Theorem~\ref{thm:mul3}), we have established that $(II) \le c_1 d^{1/2}h^{1/2}$. We now focus on bounding $(I)$, the distance between the Euler-Maruyama approximation driven by Gaussian noise ($W$) and the one driven by discrete noise ($E$).The discretized processes satisfy:
\begin{align*}
    \overline{X}_{k+1}^{(h)} &= \overline{X}_{k}^{(h)} + b(\overline{X}_{k}^{(h)})h + \sigma(kh)\Delta W_k, \quad \text{where } \Delta W_k \sim \mathcal{N}(0, hI_d) \\
    \overline{X'}_{k+1}^{(h)} &= \overline{X'}_{k}^{(h)} + b(\overline{X'}_{k}^{(h)})h + \sigma(kh)\sqrt{h}E_k, \quad \text{where } \mathbb{E}[E_k]=0, \mathrm{Cov}[E_k]=I_d.
\end{align*}

A crucial result in probability theory (the Skorokhod embedding theorem or the KMT approximation) states that for a random walk $S_n = \sum_{i=1}^n E_i$ with zero mean and finite moments, one can construct a Brownian motion $W$ on the same probability space such that the partial sums approximate the Brownian path. Specifically, there exists a coupling such that:
\begin{align*}
    \mathbb{E}\left[ \sup_{0 \le k \le 1/h} \left\| \sum_{i=0}^{k-1} \sqrt{h}E_i - W_{kh} \right\|^2 \right] \le C_E d h
\end{align*}
for some constant $C_E$. Let $M_k = \sum_{i=0}^{k-1} \sqrt{h}E_i$ and $W_k = W_{kh}$. The difference between the noise paths is small.

Let $\delta_k = \overline{X'}_{k}^{(h)} - \overline{X}_{k}^{(h)}$. Subtracting the update equations:
\begin{align*}
    \delta_{k+1} &= \delta_k + \left( b(\overline{X'}_{k}^{(h)}) - b(\overline{X}_{k}^{(h)}) \right)h + \sigma(kh)(\sqrt{h}E_k - \Delta W_k).
\end{align*}
We can rewrite the noise term accumulation. Let $Z_k = \sum_{i=0}^{k-1} (\sqrt{h}E_i - \Delta W_i)$ be the difference in the driving noise paths. Summing the recurrence:
\begin{align*}
    \delta_n = \sum_{k=0}^{n-1} \left( b(\overline{X'}_{k}^{(h)}) - b(\overline{X}_{k}^{(h)}) \right)h + \sum_{k=0}^{n-1} \sigma(kh)(\sqrt{h}E_k - \Delta W_k).
\end{align*}
Using summation by parts (Abel transformation) on the noise term:
\begin{align*}
    \sum_{k=0}^{n-1} \sigma(kh)(Z_{k+1} - Z_k) &= \sigma((n-1)h)Z_n - \sum_{k=0}^{n-1} (\sigma(kh) - \sigma((k-1)h)) Z_k.
\end{align*}
Using the Lipschitz bound on $\sigma$ and $b$, and the noise approximation bound $\mathbb{E}[\sup \|Z_k\|^2] \le Cdh$, we proceed. Taking norms and expectations:
\begin{align*}
    \|\delta_n\| &\le h \sum_{k=0}^{n-1} k_L \|\delta_k\| + \|\text{NoiseTerm}_n\|.
\end{align*}
Squaring and using Cauchy-Schwarz:
\begin{align*}
    \|\delta_n\|^2 &\le C \left( \sum_{k=0}^{n-1} \|\delta_k\|^2 h + \sup_{k} \|Z_k\|^2 \right).
\end{align*}
Applying the discrete Grönwall inequality:
\begin{align*}
    \sup_{n} \mathbb{E}[\|\delta_n\|^2] &\le C \mathbb{E}\left[\sup_k \|Z_k\|^2\right] \exp(C).
\end{align*}
Since $\mathbb{E}[\sup \|Z_k\|^2] \le \mathcal{O}(dh)$ via the embedding theorem:
\begin{align*}
    \left(\mathbb{E}\left[ \sup_{0 \le t \le 1} \left\| \overline{X'}_t^{(h)} - \overline{X}_t^{(h)} \right\|^2 \right]\right)^{1/2} \le C d^{1/2} h^{1/2}.
\end{align*}
Combining this with term $(II)$, the total error is bounded by $(C + c_1) d^{1/2} h^{1/2}$.
\end{proof}

\section{EXPERIMENTS}
All experiments were conducted on a personal computer equipped with an NVIDIA RTX 4070 GPU.
We evaluate noise substitution and zigzag sampling in diffusion models through a series of experiments. Figures~\ref{fig:various_noises} and~\ref{fig:noise_scaling} illustrate representative sample results, highlighting how noise type and noise scaling influence generation quality.

\begin{figure}[htbp]
    %\vspace{.3in}
    \begin{subfigure}{.5\textwidth}
        \centering
        \includegraphics[width=.9\linewidth]{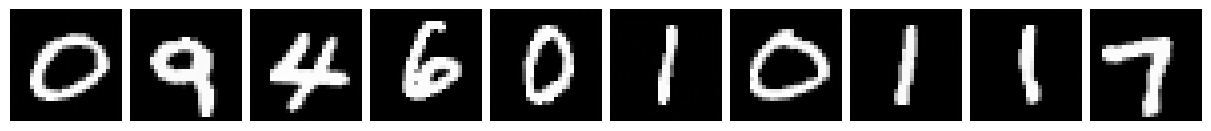}
        \caption{Gaussian Noise}
        \label{fig:sfig1}
    \end{subfigure}
    \begin{subfigure}{.5\textwidth}
        \centering
        \includegraphics[width=.9\linewidth]{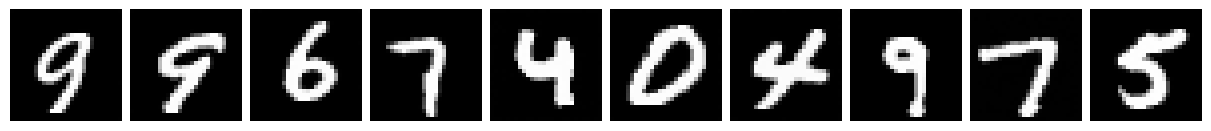}
        \caption{Discrete Gaussian}
        \label{fig:sfig1}
    \end{subfigure}
    \begin{subfigure}{.5\textwidth}
        \centering
        \includegraphics[width=.9\linewidth]{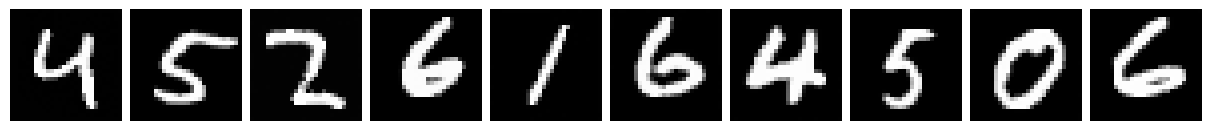}
        \caption{Uniform}
        \label{fig:sfig1}
    \end{subfigure}
    \begin{subfigure}{.5\textwidth}
        \centering
        \includegraphics[width=.9\linewidth]{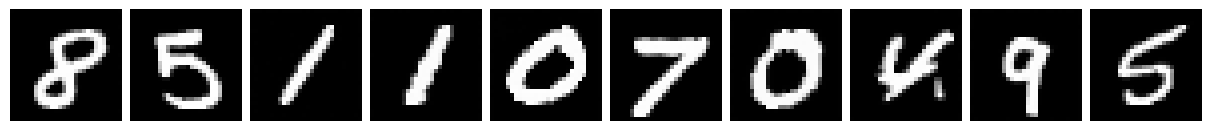}
        \caption{Rademacher Noise}
        \label{fig:sfig1}
    \end{subfigure}
    \begin{subfigure}{.5\textwidth}
        \centering
        \includegraphics[width=.9\linewidth]{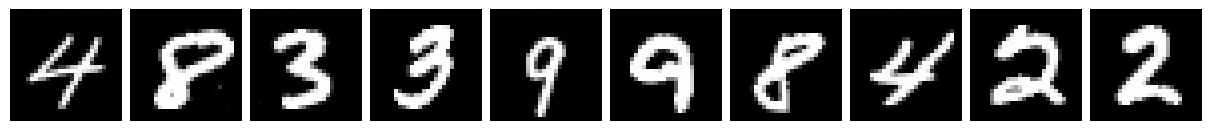}
        \caption{Laplace Noise}
        \label{fig:sfig1}
    \end{subfigure}
    \begin{subfigure}{.5\textwidth}
        \centering
        \includegraphics[width=.9\linewidth]{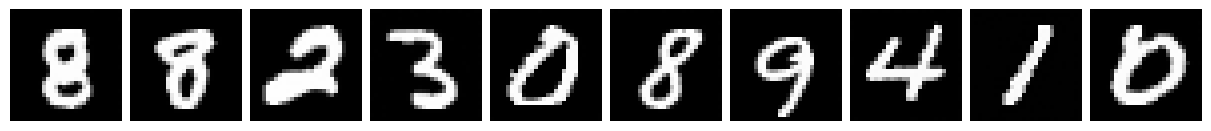}
        \caption{Triangular Noise}
        \label{fig:sfig1}
    \end{subfigure}
    \begin{subfigure}{.5\textwidth}
        \centering
        \includegraphics[width=.9\linewidth]{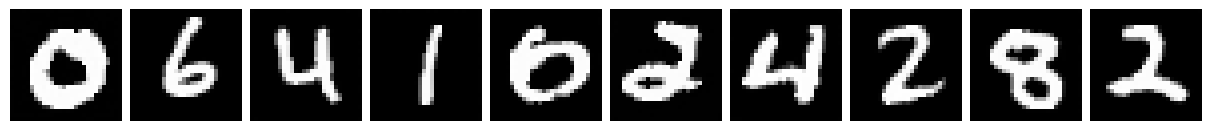}
        \caption{Arcsine Noise}
        \label{fig:sfig1}
    \end{subfigure}
   %\vspace{.3in}
    \caption{MNIST Image Samples with Various Noises}
    \label{fig:various_noises}
\end{figure}

\subsection{Performance of Various Noises}

\begin{table}[h]
    \caption{FID Scores with Various Noises} \label{sample-table}
    \begin{center}
            \begin{tabular}{lll}
            \textbf{NOISE}  &\textbf{FID(MNIST)} &\textbf{FID(CIFAR-10)} \\
            \hline \\
            Gaussian & 2.994326 & 14.642573 \\
            Discrete Gaussian & 3.005654 & 14.672305 \\
            Uniform & 2.972102 & 14.592020 \\
            Rademacher & 2.987234 & 14.540516 \\
            Laplace & 5.766727 & 34.587563 \\
            Triangular & 2.960605 & 14.503871 \\
            Arcsine & 2.987144 & 14.631359 \\
        \end{tabular}
    \end{center}
\end{table}
\textbf{Setup:} We evaluate how different noise distributions affect the sampling quality of a Denoising Diffusion Probabilistic Model (DDPM) trained on the MNIST and CIFAR10 dataset. We test several symmetric noise distributions—Gaussian, Uniform, Rademacher, Triangular, Arcsine—and one asymmetric distribution, Laplace. All noises are scaled to have zero mean and unit variance to match the standard Gaussian noise used in training. Additionally, we include Discrete Gaussian noise, a discrete random variable whose probability mass function is proportional to the probability density of a standard normal distribution evaluated at integer points. Sample quality is assessed using the Fr\'echet Inception Distance (FID) score.

\noindent \textbf{Result:} The empirical results, summarized in Table~\ref{sample-table}, demonstrate that most symmetric noise distributions produce sample quality comparable to standard Gaussian noise across both datasets. On MNIST, the FID scores for Uniform, Rademacher, Triangular, and Arcsine noise are all within a negligible margin of the Gaussian baseline. This trend holds for the more complex CIFAR-10 dataset, where Rademacher and Arcsine noise achieve parity with Gaussian noise (e.g., $14.54$ vs. $14.64$). We note that the absolute FID scores for CIFAR-10 are higher than current state-of-the-art benchmarks. This is because we employed a lightweight model architecture and a fixed, limited training budget to facilitate a broad ablation study on academic hardware. However, the key finding is the relative performance consistency: the substitution of Gaussian noise with computationally cheaper symmetric alternatives does not degrade generation quality. In contrast, the Laplace distribution—the only asymmetric noise tested—resulted in significantly higher FID scores on both datasets ($5.77$ on MNIST and $34.59$ on CIFAR-10), confirming that while the specific shape of the distribution is flexible, symmetry is a critical property for minimizing discretization error.

\begin{figure}[htbp]
    %\vspace{.3in}
    \begin{subfigure}{.5\textwidth}
        \centering
        \includegraphics[width=.9\linewidth]{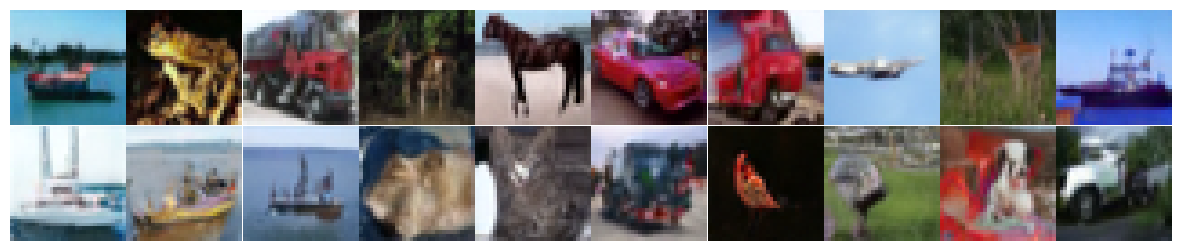}
        \caption{Gaussian Noise}
        \label{fig:sfig1}
    \end{subfigure}
    \begin{subfigure}{.5\textwidth}
        \centering
        \includegraphics[width=.9\linewidth]{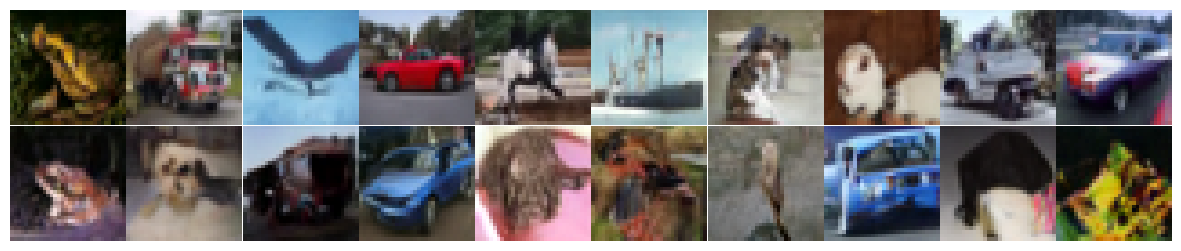}
        \caption{Discrete Gaussian}
        \label{fig:sfig1}
    \end{subfigure}
    \begin{subfigure}{.5\textwidth}
        \centering
        \includegraphics[width=.9\linewidth]{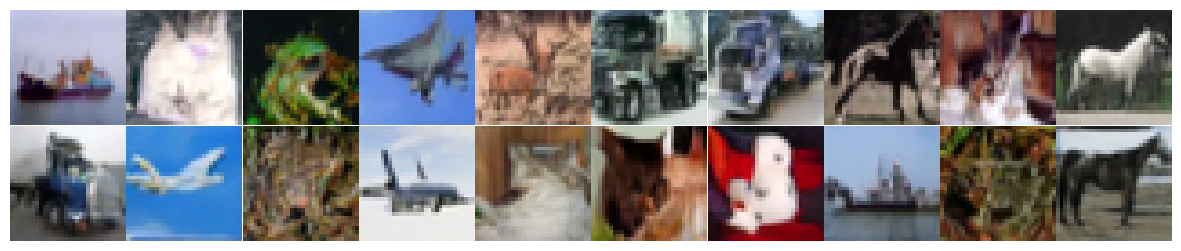}
        \caption{Uniform}
        \label{fig:sfig1}
    \end{subfigure}
    \begin{subfigure}{.5\textwidth}
        \centering
        \includegraphics[width=.9\linewidth]{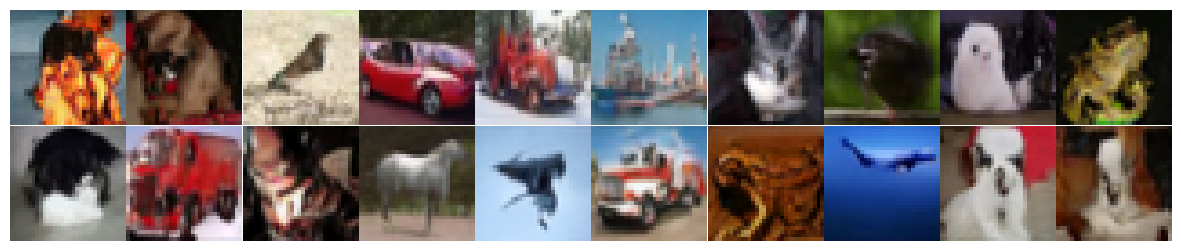}
        \caption{Rademacher Noise}
        \label{fig:sfig1}
    \end{subfigure}
    \begin{subfigure}{.5\textwidth}
        \centering
        \includegraphics[width=.9\linewidth]{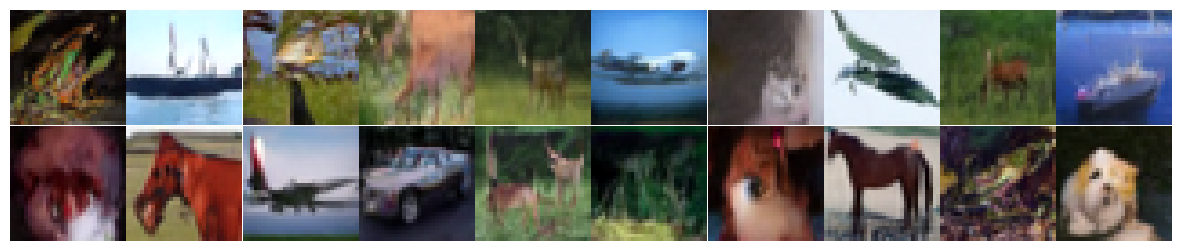}
        \caption{Laplace Noise}
        \label{fig:sfig1}
    \end{subfigure}
    \begin{subfigure}{.5\textwidth}
        \centering
        \includegraphics[width=.9\linewidth]{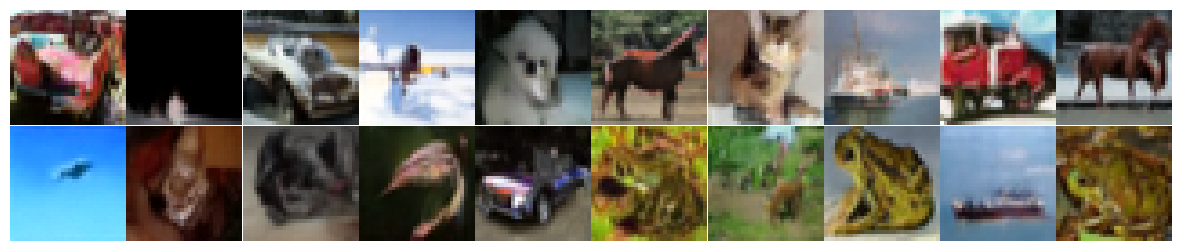}
        \caption{Triangular Noise}
        \label{fig:sfig1}
    \end{subfigure}
    \begin{subfigure}{.5\textwidth}
        \centering
        \includegraphics[width=.9\linewidth]{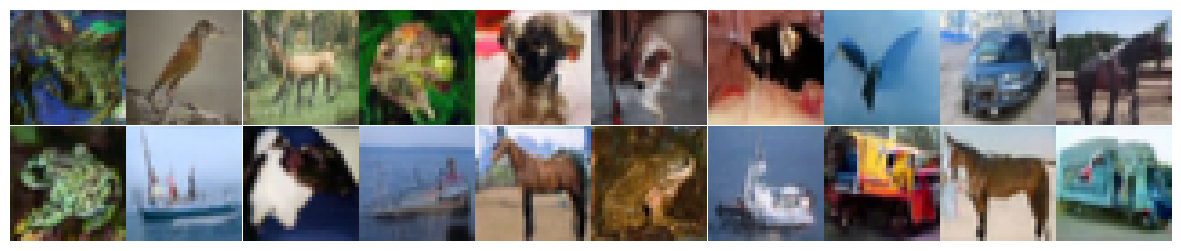}
        \caption{Arcsine Noise}
        \label{fig:sfig1}
    \end{subfigure}
   %\vspace{.3in}
    \caption{CIFAR-10 Image Samples with Various Noises}
    \label{fig:various_noises_CIFAR10}
\end{figure}

\subsection{Noise Scale Ablation}

\textbf{Setup:} To study the effect of noise variance on sample quality, we conducted an ablation study replacing standard Gaussian noise with a Rademacher distribution, where components are drawn from $\{-\alpha,\alpha\}$. This setup provides direct control over the variance through the scalar $\alpha$ while preserving the zero-mean and symmetric properties of the noise. We evaluated generated samples using the Fr\'echet Inception Distance (FID) across a range of $\alpha$ values, hypothesizing an optimum at $\alpha=1.0$ to match the unit variance of the training noise.

\textbf{Result:} The results in Table 2 confirm that noise variance is critical to the synthesis process. The FID score exhibits a sharp, convex relationship with the noise scale $\alpha$, reaching a minimum at the hypothesized optimum of $\alpha=1.0$. Insufficient variance ($\alpha<1.0$) leads to mode collapse and blurry images, whereas excessive variance ($\alpha>1.0$) results in residual noise and artifacts. These findings confirm that matching the statistical moments of the sampling noise to the training distribution is essential for high-fidelity generation.

\begin{table}[h]
    \caption{Sampling with Scaled Rademacher Noise} \label{sample-table2}
    \begin{center}
            \begin{tabular}{ll}
            \textbf{NOISE SCALE}  &\textbf{FID SCORE} \\
            \hline \\
            0.1 & 97.936673 \\
            0.2 & 77.937531 \\
            0.4 & 47.992128 \\
            0.8 & 11.771176 \\
            0.9 & 6.159960 \\
            1.0 & 2.987234 \\
            1.1 & 4.307226 \\
            1.2 & 12.735438 \\
            1.3 & 30.257889 \\
            1.5 & 90.242229 \\
        \end{tabular}
    \end{center}
\end{table}

\begin{figure}[htbp]
  %  \vspace{.3in}
    \begin{subfigure}{.5\textwidth}
        \centering
        \includegraphics[width=.9\linewidth]{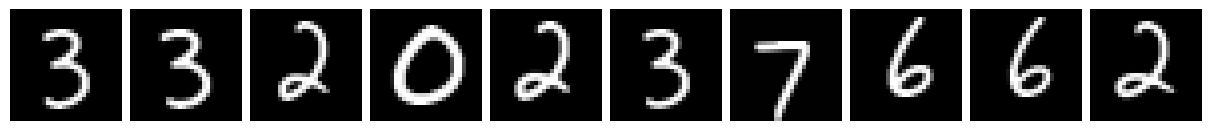}
        \caption{Scale = 0.1}
        \label{fig:sfig1}
    \end{subfigure}
    \begin{subfigure}{.5\textwidth}
        \centering
        \includegraphics[width=.9\linewidth]{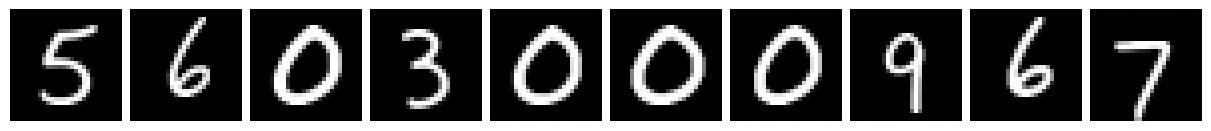}
        \caption{Scale = 0.2}
        \label{fig:sfig1}
    \end{subfigure}
    \begin{subfigure}{.5\textwidth}
        \centering
        \includegraphics[width=.9\linewidth]{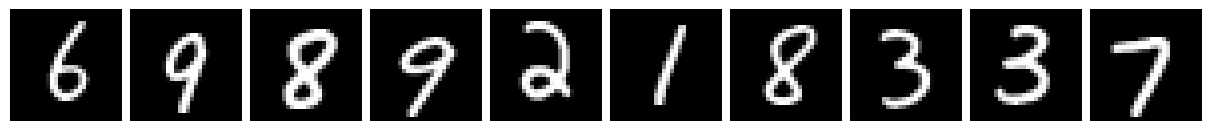}
        \caption{Scale = 0.5}
        \label{fig:sfig1}
    \end{subfigure}
    \begin{subfigure}{.5\textwidth}
        \centering
        \includegraphics[width=.9\linewidth]{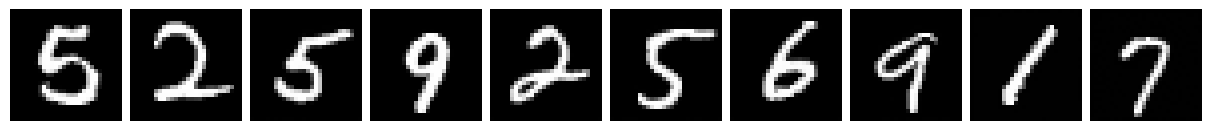}
        \caption{Scale = 1.0}
        \label{fig:sfig1}
    \end{subfigure}
    \begin{subfigure}{.5\textwidth}
        \centering
        \includegraphics[width=.9\linewidth]{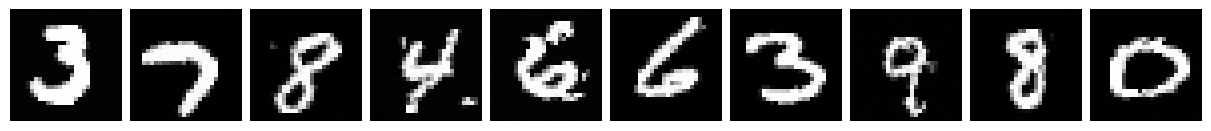}
        \caption{Scale = 1.2}
        \label{fig:sfig1}
    \end{subfigure}
    \begin{subfigure}{.5\textwidth}
        \centering
        \includegraphics[width=.9\linewidth]{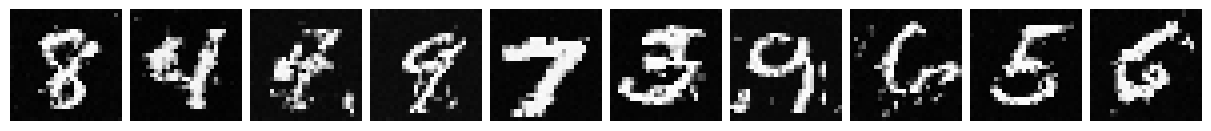}
        \caption{Scale = 1.5}
        \label{fig:sfig1}
    \end{subfigure}
    \begin{subfigure}{.5\textwidth}
        \centering
        \includegraphics[width=.9\linewidth]{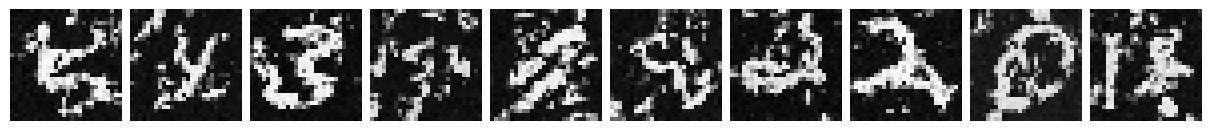}
        \caption{Scale = 2.0}
        \label{fig:sfig1}
    \end{subfigure}
    %\vspace{.3in}
    \caption{Image Samples with Scaled Rademacher Noise}
    \label{fig:noise_scaling}
\end{figure}

\section{DISCUSSION}

In this work, we presented a rigorous yet simplified analysis of the Euler–Maruyama discretization error for sampling in diffusion models. By utilizing a time-rescaling approach and Grönwall’s inequality, we established a strong convergence rate of $\mathcal{O}(1/\sqrt{T})$ under standard Lipschitz continuity assumptions. This framework offers a more accessible alternative to existing complex proofs, facilitating future theoretical explorations into the convergence properties of generative SDEs.

A key contribution of our study is the theoretical and empirical validation of noise substitution. We demonstrated that the standard Gaussian noise in the reverse process can be replaced by computationally cheaper discrete alternatives, such as Rademacher or Uniform noise, without compromising the convergence rate or sample quality. This interchangeability has profound implications for the deployment of diffusion models on resource-constrained hardware. By eliminating the need for expensive Gaussian random number generators (RNGs) and enabling the use of simple bitwise operations or integer arithmetic for noise injection, our findings pave the way for highly efficient FPGA and mobile-edge implementations.

Furthermore, our results shed light on the fundamental robustness of the diffusion sampling process. The empirical parity between Gaussian and symmetric discrete noises—observed across both MNIST and CIFAR-10 datasets—suggests that the generative capability of these models is driven primarily by the first two moments of the driving noise process, rather than the specific shape of the distribution. This aligns with the "universality" often seen in high-dimensional probability theory. However, the degradation observed with asymmetric noise (Laplace) highlights a critical boundary condition: while the distribution shape is flexible, symmetry remains a necessary condition for minimizing discretization error in the standard VP-SDE formulation.

Future work may extend this analysis to lower-precision quantization regimes, exploring the limits of noise simplification before the model's performance deteriorates. Furthermore, applying this simplified proof technique to other SDE-based paradigms, such as Consistency Models or Flow Matching, presents a promising direction for unifying the theoretical landscape of generative modeling.

\section*{Acknowledgements}
This work was supported by the New Faculty Startup Fund from Seoul National University.

\bibliography{references}
\bibliographystyle{plainnat}

\newpage
\appendix
\onecolumn
\section{EXPERIMENT SETTINGS}

\subsection{Performance of Various Noises}

The alternative noises used in our experiments are defined as below. $\mathsf{torch}$ and $\mathsf{math}$ modules are used.

\begin{lstlisting}[language=Python, caption={Discrete Gaussian Noise}]
def DG_matrix(shape, device = None):
    if device is None:
        device = torch.device('cuda' if torch.cuda.is_available() else 'cpu')
    probs = torch.rand(shape, device = device)
    matrix = torch.zeros_like(probs)

    sqrt3 = math.sqrt(3)
    matrix[probs < 1/6] = -sqrt3
    matrix[(probs >= 1/6) & (probs < 1/3)] = sqrt3
    return matrix
\end{lstlisting}

\begin{lstlisting}[language=Python, caption={Uniform Noise}]
def UD_matrix(shape, device=None):
    if device is None:
        device = torch.device('cuda' if torch.cuda.is_available() else 'cpu')
    
    a = -math.sqrt(3)
    b = math.sqrt(3)
    
    matrix = torch.empty(shape, device=device).uniform_(a, b)
    return matrix
\end{lstlisting}

\begin{lstlisting}[language=Python, caption={Rademacher Noise}]
def Rademacher_matrix(shape, device=None):
    if device is None:
        device = torch.device('cuda' if torch.cuda.is_available() else 'cpu')
    matrix = 2 * torch.randint(0, 2, shape, device=device).float() - 1
    return matrix
\end{lstlisting}

\begin{lstlisting}[language=Python, caption={Laplace Noise}]
def Laplace_matrix(shape, device=None):
    if device is None:
        device = torch.device('cuda' if torch.cuda.is_available() else 'cpu')
    b = 1 / math.sqrt(2)
    u = torch.rand(shape, device=device) - 0.5
    matrix = -b * torch.sign(u) * torch.log1p(-2 * u.abs())
    return matrix
\end{lstlisting}

\begin{lstlisting}[language=Python, caption={Triangular Noise}]
def Triangular_matrix(shape, device=None):
    if device is None:
        device = torch.device('cuda' if torch.cuda.is_available() else 'cpu')
    a = -math.sqrt(6)
    b = 0
    c = math.sqrt(6)
    u = torch.rand(shape, device=device)
    matrix = torch.where(
        u < (b - a) / (c - a),
        a + torch.sqrt(u * (b - a) * (c - a)),
        c - torch.sqrt((1 - u) * (c - b) * (c - a))
    )
    return matrix
\end{lstlisting}

\begin{lstlisting}[language=Python, caption={Arcsine Noise}]
def Arcsine_matrix(shape, device=None):
    if device is None:
        device = torch.device('cuda' if torch.cuda.is_available() else 'cpu')
    u = torch.rand(shape, device=device)
    matrix = math.sqrt(2) * torch.sin(math.pi * (u - 0.5))
    return matrix
\end{lstlisting}

\subsection{Evaluation}

The evaluation of each experiments are done using Fr\'echt Inception Distance (FID) score and CLIP-Maximum Mean Discrepancy (CMMD) score. For FID score evaluation, we used the compute\_fid function in cleanfid module. We used CLIPModel and CLIPProcessor in transformer module to extract CLIP features from images and compute CMMD score.

\end{document}